\newcommand{\R}{\mathbb{R}}
\newcommand{\citet}[1]{\citeauthor{#1} \shortcite{#1}} 
\newcommand{\citep}{\cite} 
\newtheorem{thm}{Theorem}
\newtheorem{prop}[thm]{Proposition}
\newtheorem{Def}[thm]{Definition}
\renewcommand*\env@matrix[1][\arraystretch]{%
  \edef\arraystretch{#1}%
  \hskip -\arraycolsep
  \let\@ifnextchar\new@ifnextchar
  \array{*\c@MaxMatrixCols c}}
\algnewcommand{\algorithmicgoto}{\textbf{go to}}%
\algnewcommand{\Goto}[1]{\algorithmicgoto~\ref{#1}}%
\title{Deep Message Passing on Sets}
\author{Yifeng Shi , Junier Oliva , Marc Niethammer\Large \\ % All authors must be in the same font size and format. Use \Large and \textbf to achieve this result when breaking a line
Department of Computer Science, UNC-Chapel Hill, USA\\
\{yifengs, joliva, mn\}@cs.unc.edu
%\textsuperscript{\rm 1}\\%If you have multiple authors and multiple affiliations
% use superscripts in text and roman font to identify them. For example, Sunil Issar,\textsuperscript{\rm 2} J. Scott Penberthy\textsuperscript{\rm 3} George Ferguson,\textsuperscript{\rm 4} Hans Guesgen\textsuperscript{\rm 5}. Note that the comma should be placed BEFORE the superscript for optimum readability
% email address must be in roman text type, not monospace or sans serif
}
\begin{document}

\maketitle

\begin{abstract}
Modern methods for learning over graph input data have shown the fruitfulness of accounting for relationships among elements in a collection. However, most methods that learn over set input data use only rudimentary approaches to exploit intra-collection relationships. In this work we introduce \textit{Deep Message Passing on Sets} (DMPS), a novel method that incorporates relational learning for sets. DMPS not only connects learning on graphs with learning on sets via \textit{deep kernel learning}, but it also bridges message passing on sets and traditional diffusion dynamics commonly used in denoising models. Based on these connections, we develop two new blocks for relational learning on sets: the \textit{set-denoising block} and the \textit{set-residual block}. The former is motivated by the connection between message passing on general graphs and diffusion-based denoising models, whereas the latter is inspired by the well-known residual network. In addition to demonstrating the interpretability of our model by learning the true underlying relational structure experimentally, we also show the effectiveness of our approach on both synthetic and real-world datasets by achieving results that are competitive with or outperform the state-of-the-art.
\end{abstract}

\section{Introduction}
\label{sec:introduction}
%The majority of machine learning methods are \textit{instance-based}, meaning that for every training instance there exists a corresponding label. Notwithstanding, many areas in machine learning analyze data that can alternatively be viewed as or is \textit{set-structured}: traditional regression and classification problems focus on a set of training instances with one label for each set element, while areas like multiple instance learning deal with set-structured data more explicitly. 

Significant effort in machine learning has been devoted to methods that operate over fixed-length, finite vectors. These methods ultimately perform some variant of a classic functional estimation task where one maps one fixed input vector $x \in \R^d$ to another fixed output vector $y \in \R^p$ via an estimated function $\hat{f}: \R^d \mapsto \R^p$. Notwithstanding the impressive progress of these approaches, the world we live in is filled with data that does not come neatly pre-packaged into fixed finite vectors. Instead, often data is observed and reasoned over in collections such as sets. For instance, when performing object detection on point clouds, one assigns a label (the object type) based on the underlying shape that is inferred collectively using all observed points (as opposed to labelling any one individual 3d point). In these, and many other tasks, one seeks to assign an output response to an \emph{entire} set of elements in which the elements can themselves be related to each other in a complex manner. Based on this need for analysis approaches that can operate on sets relationally, we develop a machine learning (ML) estimation technique that incorporates relational learning for set-structured data. 

Set-structured data in its general form presents fundamental challenges to many existing machine learning methods. First, an appropriate method should be invariant to the order of the elements in the input set; i.e., different permutations should not influence the final response since the underlying instance is an \emph{unordered} set. Second, a method should allow input sets of variable cardinalities; i.e., we should be able to associate a single label with a variable number of set elements. Both of these challenges render traditional approaches based on ordered inputs of fixed dimension (e.g., vectors or images of given sizes) like standard multilayer perceptrons (MLP) inapplicable. 
%Yet, dealing with these challenges is indispensable for general learning frameworks on sets. 
Some rectifications have been proposed to cope with these challenges without directly addressing them. For example, one can train a recurrent neural network (RNN) to adapt to variable-sized input sets. %Furthermore, by randomly permuting the inputs the dependency on input order can be reduced. However, as the number of permutations of an input set grows factorially with set cardinality, 
However, there is no guarantee that a network can learn to be permutation invariant~\citep{vinyals}, especially in the context of input sets with large cardinality and small sample sizes. Hence, in this work we instead consider deep learning (DL) architectures that are specifically constructed to handle set-structured data.
%achieve permutation invariance and that allow input sets of varying cardinalities. 

%As the first few attempts to develop the general framework of learning on sets in terms of learning a generic function on the input set, \cite{zaheer} introduced an universal architecture, where \cite{qi} proposed an universal approximator, for constructing and approximating such generic set functions. The core to both methods is the \textit{set pooling} operation, which generates a set-level representation of an input set from its elements, implying that the key to a successful set-learning scheme is to derive an informative and expressive set-level representation for downstream tasks. Any permutation invariant set pooling operation can be a valid choice in this case. For example, \verb+mean+ was used in \cite{zaheer}, element-wise \verb+max+ was used in \cite{qi}, and attention-based pooling was introduced in \cite{max}. However, most methods in the literature process elements in the input set independently, which can potentially limit the expressiveness of the final set-level representation, and make learning unnecessarily difficult. It is also evidential in \cite{santoro} that incorporating relational learning usually can benefit the learning procedure significantly.

\begin{figure*}[!t]
\centering
\begin{subfigure}{.473\textwidth}
  \centering
  \includegraphics[width=1\linewidth]{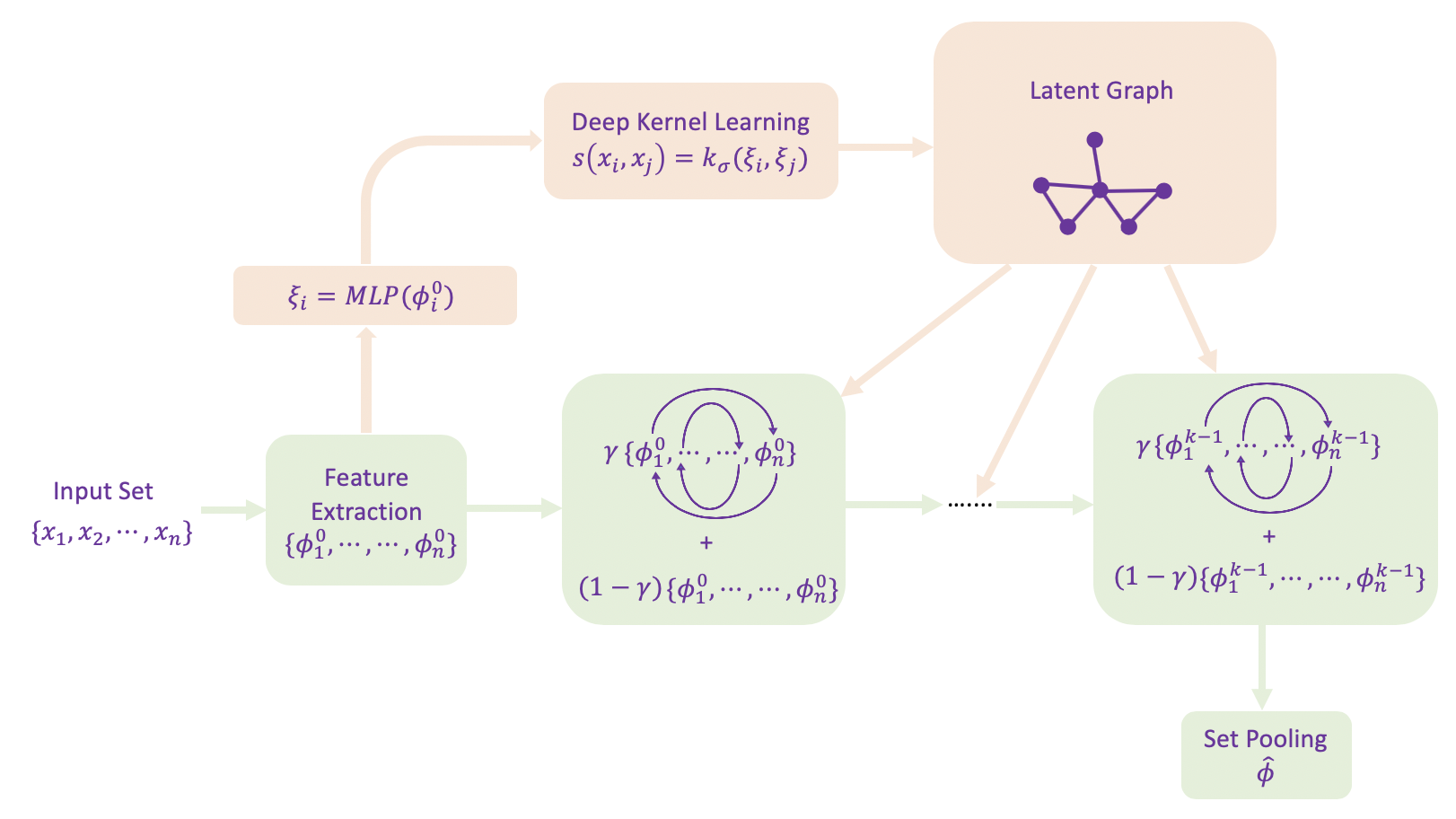}
  \caption{DMPS with the Set-denoising Block}
  \label{fig:dmps_overview}
\end{subfigure}%
\begin{subfigure}{.265\textwidth}
  \centering
  \includegraphics[width=1\linewidth]{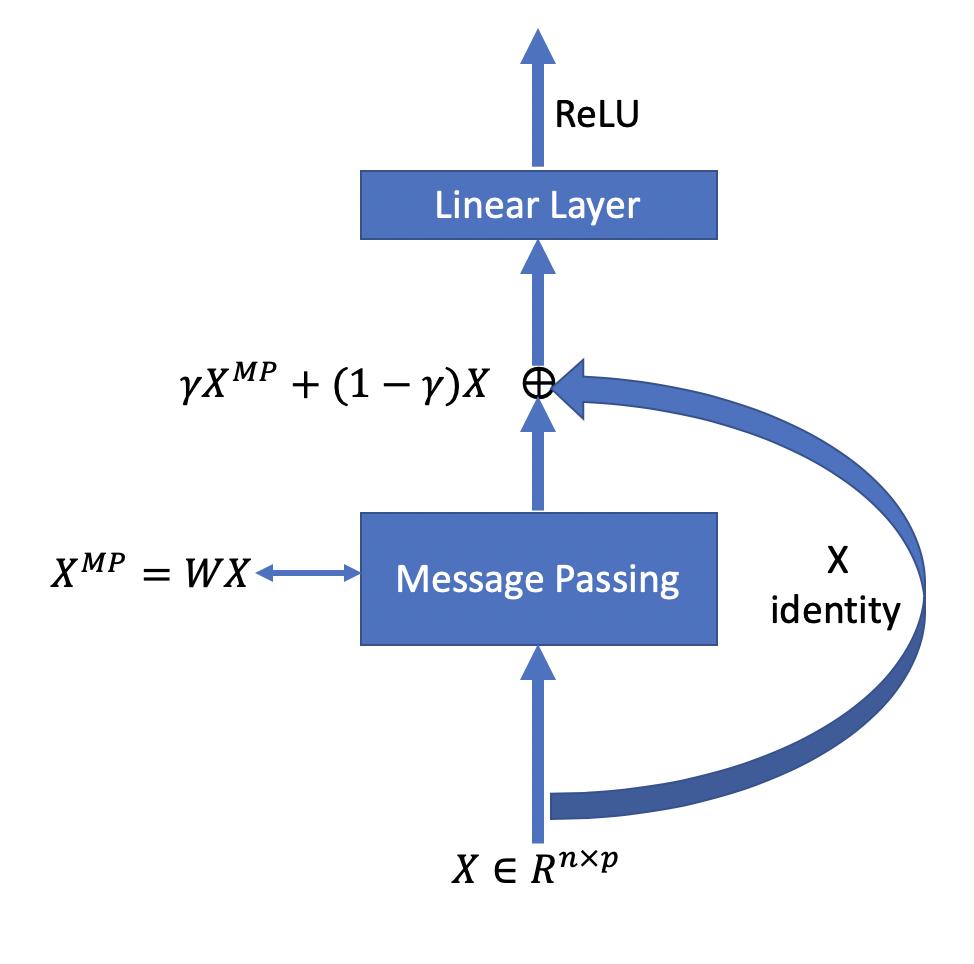}
  \caption{Set-denoising Block}
  \label{fig:denoising}
\end{subfigure}%
\begin{subfigure}{.287\textwidth}
  \centering
  \includegraphics[width=1\linewidth]{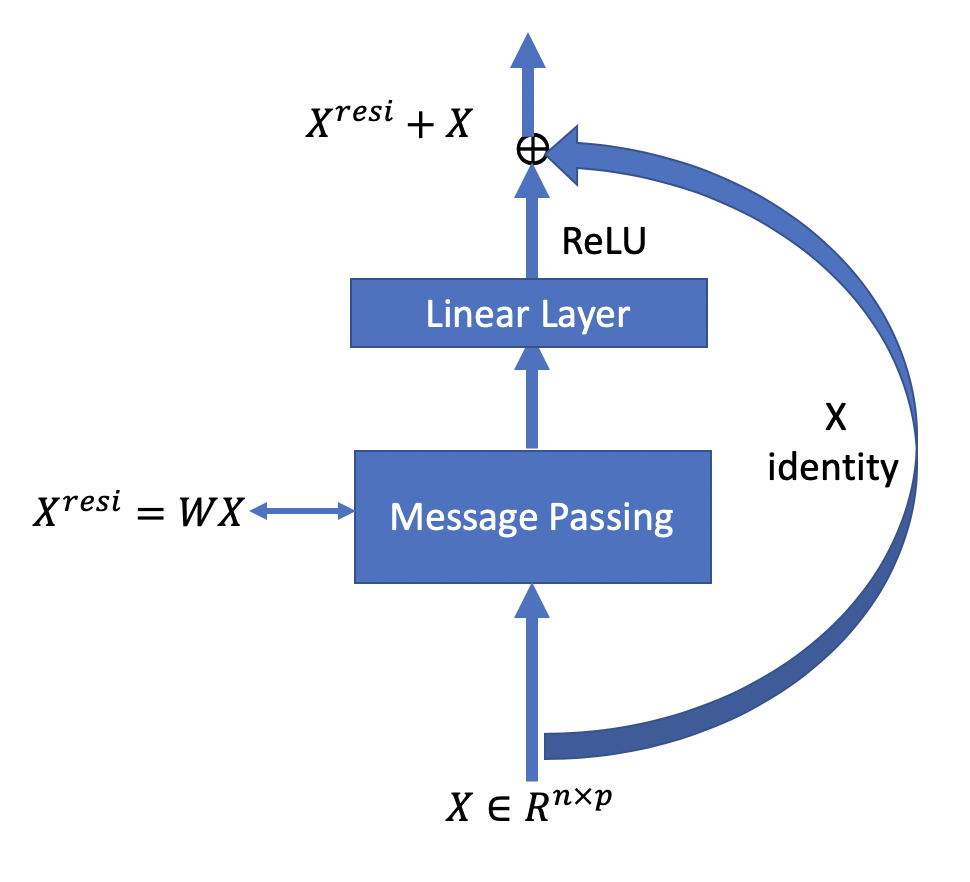}
  \caption{Set-residual Block}
  \label{fig:residual}
\end{subfigure}
\caption{An overview of DMPS coupled with the set-denoising block as an illustrative example, and magnified views of the set-denoising block and the set-residual block. We briefly explain each figure here. (a): Given a set of objects, $\{x_1, x_2, \cdots, x_n\}$, we first extract feature representations of the set elements, $\{\phi^0_1, \phi^0_2, \cdots, \phi^0_n \}$, through, for example, a convolution neural network if the set elements are images. We estimate the underlying latent graph of the set using the extracted features via deep kernel learning \cite{wilson}. We then apply a predefined number ($k$, in this case) of set-denoising blocks, which can be replaced by the message passing step or the set-residual block, to the extracted features in order to produce meaningful final feature representations that encode complex interactions among set elements. Lastly, we use a set pooling operation to generate a set-level feature representation for downstream tasks; (b): We first process the feature matrix, $X$, with a message passing step to produce $X^{MP}$, and then compute a weighted sum between $X$ and $X^{MP}$ with a learnable diffusion coefficient $\gamma$. Lastly we process the added matrix through a linear layer followed by a non-linear operator; (c): We still process the feature matrix, $X$, with a message passing step to produce $X^{resi}$. Instead of adding $X^{resi}$ immediately back to $X$, we process $X^{resi}$ with a linear layer followed by a non-linear operator first, and then add the resulting feature matrix to $X$.}
\label{fig:overall}
\end{figure*}

Architectures that handle set-structured data exist. These techniques use global-pooling operations~\cite{qi}, and intermediate equivariant mappings~\cite{zaheer} to produce estimators that are invariant to permutations. While these architectures are asymptotically universal, they are notably limited in the intra-set dependencies they model. For instance, processing each set element 
%in the input set 
independently using an MLP and aggregating features via max pooling to produce a set-level feature representation, as proposed in \cite{qi}, may have difficulty capturing pairwise relations among the set elements. This indicates that the effectiveness of such methods may be limited for complex real-world data with finite samples. Hence, we propose to build more expressive architectures by explicitly incorporating relational learning on elements of input sets, a proven strategy for better learning \cite{santoro}. Although not the first work that considers relational information for modeling set-structured data, as shown in our experiments, our approach learns to capture, and is capable of representing, the relational structure of each input set in an intuitive way, allowing for a deeper understanding of the data in hand while still leading to more effective learning results. 
\smallskip

%\cite{lee} proposed a remedy for such an issue, applying the transformer introduced in \cite{Vaswani} on sets in order to capture the interactions among elements in the input set. More specifically, they project the set elements onto multiple subspaces, perform \textit{self-attention} within each subspace simultaneously, and aggregate the results from each subspace, with the hope that the network would learn to focus on different parts of the sets in different subspaces while capturing subtle interactions among the set elements.

% In this paper, we take a step further in the direction of relational learning on sets by introducing \textit{Deep Message Passing on Sets} (DMPS). The novelty of our approach is threefold:
% \begin{itemize}
% \item We unite learning on graphs with learning on sets through kernel learning, which provides us with the luxury of flexible relational learning on sets.
% \item We introduce two novel blocks, the \textit{denoising block} and the \textit{residual block}, to further facilitate learning interactions. 
% \item While being effective and comparable to the state of the art, our model is intuitively easy to understand through the lance of the connections we make among message passing on sets, diffusion, and relation networks \cite{santoro}. Moreover, our model, compared to \cite{lee} for example, involves a lot less hyperparamters.
% \end{itemize}

\noindent \textbf{Main Contributions}\ \ \  In this paper, we further relational learning on sets with our framework \textit{Deep Message Passing on Sets} (DMPS). Our main contributions are: 1) we unite learning on graphs with learning on sets through deep kernel learning, allowing for flexible relational learning on sets; 2) we develop two novel blocks, the \textit{set-denoising block} and the \textit{set-residual block}, to further facilitate learning interactions among set elements; 3) in addition to demonstrating the interpretability of our model by successfully learning the true underlying relational structures, we show the effectiveness of our approach on both synthetic and real-world datasets by achieving results that are competitive with or outperform the state-of-the-art.

\section{Background}
\label{sec:background}

This section introduces background material relevant for the development of our DMPS approach.

\subsection{General Formulation of Valid Set Functions}
\label{subsec:general_formulation_of_valid_set_functions}

Given an input set $X = \{x_1, x_2, \cdots, x_n\}$, most permutation invariant functions that operate on sets studied in recent literature~\cite{zaheer,qi,max} belong to the following class of functions: 
\begin{equation}
\mathbb{F}(f) = \{f: f(X) = \kappa \left(pool\{\phi_X(S_1), \dots, \phi_X(S_m)\}\right)\}\,,
\label{eq:permutation_invariant_fcn_class}
\end{equation}
where $S_i$ for $1\leq i \leq m$ is a subset of the power set of X, $m$ is the number of such subsets being modeled, $\phi_X$ is a function that acts on the power set of $X$, $\textit{pool}$ is a permutation invariant pooling operation that produces a set-level representation for downstream tasks, and $\kappa$ is another function that transforms the set-level representation into the output space of the model. Most methods in the literature differ in the choice of $\phi_X$. 
%For instance, in \citet{qi} $\phi_X$ is a multilayer perceptron network acting on set elements independently, in \citet{zaheer} $\phi_X$ acts on the entire set via an equivariant transformation, while in \citet{lee} $\phi_X$ is the transformer~\cite{Vaswani}. 
To explicitly encode interactions among set elements, we will construct a function $\hat{\phi}_X$ that acts on a nontrivial subset of $X$ using a message passing scheme. 

\subsection{Message Passing on Graphs}
Representation learning on graphs is an active research area \cite{ham}. We focus on the message passing scheme. Consider a graph $G = \{V,E\}$, where $V$ is the set of vertices and $E$ is the set of edges. We assume that each node $v$ has a node feature $h_v$ and each edge between two vertices, say $v$ and $w$, has an edge feature $e_{vw}$. Message passing on graphs can be summarized in terms of a \textit{message passing phase} and a \textit{readout} phase~\cite{gilmer}. In the message passing phase we update each $h_v^t$ as 
$$m_v^{t+1} = \sum_{w\in N(v)} M_t(h_v^t, h_w^t, e_{vw})\ \text{(propagate neighborhood)}$$
$$h_v^{t+1} = U_t(m_v^{t+1}, h_v^t)\ \ \text{(update the feature vector)}$$
where $M_t$ and $U_t$ are the feature aggregation and update functions, respectively, $N(v)$ denotes the neighborhood of $v$, and $t$ enumerates the message passing step. In the \textit{readout phase}, we produce a graph-level feature representation from the node features for downstream tasks with $\hat{f}$ as
$$\hat{f} = R\left(\{h(v)|v\in V\}\right) $$ 
where $R$ is the readout function. To ensure the message passing scheme is permutation invariant with respect to graph isomorphisms, an example of a trio of $M_t$, $U_t$, and $R$ can be a function that computes a weighted average of the neighborhood features based on the edge weights, a concatenation operator, and a sum operator, respectively.

\section{Deep Message Passing on Sets}
\label{sec:deep_message_passing_on_sets}

We bridge learning on graphs and learning on sets with the message passing scheme, by first learning an underlying latent graph that represents the connectivity of the set elements, and then applying message passing to this latent graph to incorporate relational learning into learning on sets (see Fig.~\ref{fig:dmps_overview} for an illustration). In this way, DMPS leverages relational information to encode input sets in contrast to more traditional approaches \cite{zaheer,qi}, where each set element is either processed through some rudimentary equivariant transformations or in an independent manner. 
%Furthermore, unlike in~\citet{lee} where the proposed set transformer requires choosing multiple hyperparamters, for example the dimension of the subspaces and the number of subspaces, DMPS depends on a smaller number of hyperparameters. %This can potentially make learning more robust with respect to different datasets across different application domains. \mnl{Removed this, as this was a claim that is, at least at this point, not backed up by experimental evidence}

\subsection{Latent Graph Learning}
\label{subsec:latent_graph_learning}

Message passing on graphs is based on neighborhood structures and edge weights of the graphs. Our goal is to leverage relational information to encode elements in an input set via message passing, a natural way to capture the intra-dependencies among the elements if a graph that appropriately underpins the input set exists. However, unlike graph data, we generally do not know a-priori what neighbors a particular set element has and how strongly it is connected to these neighbors. Instead we need to infer such a graph structure from the set itself, ideally in an end-to-end fashion that is optimized jointly with the message passing scheme and a downstream task objective. To this end, we propose to learn edge weights among set elements, say $x_i$ and $x_j$, via a similarity function, $e_{i,j} = s(x_i,x_j)$, where the similarity function itself is learned through the \emph{deep kernel learning} scheme~\cite{wilson}. 
%We use deep kernel learning~\cite{wilson} to learn this similarity function. 
Specifically, deep kernel learning uses a shared multilayer perceptron network (MLP) to transform set elements into a feature space in which a kernel function is applied to the resulting feature representations, i.e.  $s(x_i,x_j) = k_{\sigma}\left(\text{MLP}(x_i), \text{MLP}(x_j)\right)$ where $k_{\sigma}$ is a valid kernel function with an adaptive hyperparameter $\sigma$. Following this kernel strategy, we effectively use an infinite number of adaptive basis functions to estimate the similarities among set elements. 
%We emphasize again that we end-to-end train all parameters, i.e the parameters of the multilayer perceptron network used to transform the set elements and the hyperparameter(s) of the kernel function, along with the downstream classification task.

\subsection{Message Passing on Sets}
\label{subsec:message_passing_on_sets}

We simplify message passing on graphs to adapt to our setting: 
\begin{Def}
Given a set $X = \{x_1, x_2, \dots, x_n\}$ where $x_i \in R^p$ for all $i$, we define message passing on sets as an iterative updating procedure that updates each set element as a weighted sum of the entire set $x_i \longleftarrow \sum_{j=1}^n w_{i,j} x_j$,\ where $\sum_{j} w_{i,j} = 1$,\ $\forall i: \ 1 \leq i \leq n$, and $w_{i,j} \geq 0$,\ $\forall i,j: \ 1 \leq i,j \leq n$. 
\end{Def}
\noindent More compactly, we have $X^{t+1} = W X^{t}$ where $t$ denotes the time step, $X^{t+1}, X^{t} \in R^{n\times p}$, and $W \in R_+^{n\times n}$ is an adaptive, row-normalized stochastic matrix constructed using the deep kernel learning scheme. More specifically, given an input set, we first construct the kernel matrix $K$ where $K_{i,j} = s(x_i,x_j)$ using the learned similarity function. We then obtain $W$ by applying the \verb+Softmax+ operator to $K$ to ensure the rows of $K$ sum up to one, allowing us to interpret $W$ as the weighted, row-normalized adjacency matrix of the underlying (fully-connected) latent graph. If one possesses prior knowledge about the set elements, choosing an appropriate threshold, say $\delta$, and setting the entries of $W$ that are smaller than $\delta$ to zero would result in a sparser graph. Furthermore, if one were to stack, for example, $k$ message passing steps (i.e, $1 \leq t \leq k+1$), the estimated weight matrix, $W$, of the learned latent graph is shared across the stacked $k$ steps. This is to say, although $W$ is jointly estimated with other parameters, it is not step-specific, i.e., the underlying latent graph is assumed to be static. Stacking multiple message passing steps thus can be interpreted as propagating information from each set element's multi-hop neighbors to encode higher-order information. Next, we develop some concepts that are needed to introduce the set-denoising block.

%To encode higher order relations, or to propagate information from nodes that are a few hops away in terms of the underlying graph, we simply repeat the message passing step a predefined number of times.

\subsection{Diffusion on General Graphs}
\label{subsec:dmps_as_diffusion_on_graphs}

We present the update equation that explains message passing and motivates the set-denoising block from a diffusion point-of-view. We refer interested readers to the supplementary material for a more detailed treatment. We first point out that the Dirichlet energy can be discretized for graphs as
\begin{equation}
E(\{x_i\}) = \frac{C}{2} \sum_{(i,j)\in E} w_{i,j}||x_i - x_j||_2^2,
\label{eq:discrete_dirichlet}
\end{equation}
where $x_i$ denotes the feature vector of node $i$, $w_{i,j}$ is the weight of the edge connecting nodes $i$ and $j$, and $C$ is a constant. Differentiating Eq.~\eqref{eq:discrete_dirichlet} with respect to $x_i$, discretizing time with an Euler-forward approximation, and rearranging the terms, we obtain
\begin{eqnarray}
    x_i^{t+1} &=& (1-\delta t C \sum_j w_{i,j})x_i^t + \delta t C \sum_j w_{i,j} x_j^t,\\
    &=& (1-\delta t C)x_i^t + \delta t C \sum_j w_{i,j} x_j^t.\label{eq:discrete_anisotropic_heat_equation}
\end{eqnarray}
where $\delta t$ denotes the time step.

Hence, the update rule is a convex combination of the current node feature $x_i^t$ and the update feature obtained by the message passing step. Subject to time-step constraints on $\delta t$, larger $\delta t$ will result in a smoother solution. If we choose the constant $C$ and the time step $\delta t$ such that $C\delta t = 1$, we recover the message passing step. This observation offers us another interpretation of DMPS: message passing with the weight matrix $W$ is equivalent to diffusing each set element based on the entire set, i.e, updating each set element by a weighted average of the entire set. 
%To take a step further, we make $\delta t C$ a learnable parameter, allowing the model to adaptively select the diffusion dynamics. 

%Note also that this relationship between a denoising form of message passing and diffusion opens up the possibility for an entirely new family of message passing algorithms based on the choice of the energy function. Would one, for example, discretize an energy of the form $\int w(x)\|\nabla u(x)\|_2~dx$ instead, we would obtain a form of weighted total-variation message passing. Using robust norms~\cite{black1996robust} would also be interesting.

\subsection{Set-denoising and Set-residual Blocks}
\label{subsec:denoising_and_residual_blocks}

Recall that beyond first-order relations, stacking a desired number of message passing steps allows us to take higher-order interactions among set elements into account. While enhancing the model's capability of capturing complex relational signals, such stacking results in deeper networks, which can potentially cause problems in terms of over-smoothing the feature representations and other common difficulties in training deep networks.  
Based on the previously derived update equation~(\ref{eq:discrete_anisotropic_heat_equation}) and the residual network \cite{he}, we propose to address both concerns by introducing the set-denoising block and the set-residual block. %We first describe the denoising block and the residual block. We then will discuss the intuitions behind them. 
Before detailing the architectures of those two blocks, we elaborate on the intuitions behind them:
\begin{itemize}
    \item  It is well-known that a discretized diffusion process with Neumann boundary conditions on a graph converges to a steady state (i.e all node features being the same) with enough time steps. In our setting, by stacking $n$ message passing steps we effectively are ``running" the discretized diffusion process on the latent graph over $n$ time-steps. Although accounting for interactions among set elements is crucial, it is also indispensable to retain meaningful distinctions among them (e.g, all set elements sharing the same feature representation when the diffusion process has converged is obviously not ideal for learning). It is thus important to avoid over-smoothing when stacking message passing steps, a critical observation that is attested by one of our experimental studies. 
    %Motivated by diffusion dynamics, we introduce a learnable diffusion coefficient $\gamma$ to adaptively adjust the optimal degree of smoothing at each message passing step, and thus avoid over-smoothing.
    \item Ideally, the architectures of either the set-denoising block or the set-residual block should alleviate the concerns of vanishing gradient or difficulty of learning the identity mapping when building a deep network.
\end{itemize}

%An advantage of using either the denoising block or the residual block is that we can stack multiple such blocks to increase the expressiveness and complexity of the final feature representations. Furthermore, when learning an optimal convex combination of the original feature representation and the feature representation resulting from the message passing step in the case of the denoising block, or the difference between the optimal feature representation and the original feature representation, we avoid over-smoothing the features from the point of view of diffusion. 

\begin{algorithm}[!t]
\SetAlgoLined
\KwResult{Final label prediction $\hat{y}$;}
\textbf{Input}: $\{x_1,x_2,\dots,x_n\}$: a set of objects; \textit{k}: the number of set-denoising blocks desired\;
\textbf{Learnable Parameters}: $H_t$, $\forall t:1\leq t \leq k$: parameters of the linear layers; $\gamma$: the diffusuion coefficient\;
\textbf{Initialization}: $t = 1$\;
---Extract $p$-dimensional numerical feature of each set element $x_i$ and form a feature matrix $X\in R^{n\times p}$\; 
---Construct the weight matrix $W$ from the feature matrix $X$ using deep kernel learning\;
    \While{$t\leq k$}{
        $X^{t+1} = (1-\gamma) X^t + \gamma WX^t$\;
        $X^{t+1} = \tau\left(X^{t+1} H_{t}\right)$\;
        $t = t + 1$\;
    }
$x = \verb+pool+(X^k)$\;
$\hat{y} = \kappa(x)$\
\caption{Deep Message Passing on Sets with the Set-denoising Block}
\label{alg:dmps}
\end{algorithm}

Now we formally introduce the two proposed set blocks

\begin{itemize}
    \item \textbf{Set-denoising Block:} As depicted in Fig.~\ref{fig:denoising}, we first apply message passing to the feature matrix $X$, resulting in $X^{\text{MP}}$. The original feature matrix $X$ is then combined with $X^{\text{MP}}$ via a convex combination. This convex combination is based on a learnable diffusion coefficient $\gamma\in (0,1)$, which 
    %is directly motivated by denoising through and of the form of 
    corresponds to $\delta t C$ in the discretized anisotropic heat equation~\eqref{eq:discrete_anisotropic_heat_equation}. 
    %The resulting feature matrix is then input into a linear layer followed by a non-linear activation function to increase the expressiveness of the derived features. 
    %While the main contribution of DMPS is to encode interactions among set elements through diffusion,  
    In other words, we do not explicitly choose the constant $C$ and the step size $\delta_t$ in~\eqref{eq:discrete_anisotropic_heat_equation}; instead we either fix $\gamma$ upfront or jointly learn it with other model parameters. It is worth noting that if $\gamma$ is learnable, we effectively allow our model to adaptively adjust the optimal degree of smoothing. We choose $\gamma$ to be the same for each block, though making $\gamma$ block-specific is possible. The linear layer followed by a non-linear operator at the end serve to further increase the expressiveness of the learned features. 
    %On top of avoiding over-smoothing features, the architecture of the set-denoising block alleviates some common problems that come with training a deep network~\cite{srivastava}.
    
    \item \textbf{Set-residual Block:} Fig.~\ref{fig:residual} is directly motivated by the residual network~\cite{he}: assuming there exists an optimal feature matrix for learning, $X^{\text{optim}}$, it might be easier for the network to learn the difference, $X^{\text{optim}} - X$, through message passing as opposed to learn the optimal feature matrix $X^{\text{optim}}$ from scratch. Moreover, the architecture of the set-residual block alleviates some common problems that come with training a deep network~\cite{he}.
\end{itemize}

\subsection{Deep Message Passing on Sets (DMPS)}
\label{subsec:dmps}

As a concrete example, Algorithm~\ref{alg:dmps} outlines \textit{Deep Message Passing on Sets} (DMPS) coupled with the set-denoising block, where $\tau$ is an element-wise non-linear operator, \verb+pool+ is a set pooling operator, and $\kappa$ is a function (e.g a fully-connected layer) that transforms the set-level feature representation into the output space. 
%$H_t$ is a learnable parameter (a matrix) that allows for more flexible feature learning and is different for each time step $t$, $\tau$ is an element-wise non-linear operator, 
%The equations within the \verb+while+ loop implement the set-denoising block, and it is the only part of the algorithm that requires alteration if one wishes to implement the orignal message passing step or the set-residual block instead. 
%We again emphasize that we train all parameters involved in \textbf{Algorithm 1} in an end-to-end fashion. 

\subsection{Analysis}
\label{sec:analysis}

Notice that with or without the set-denoising/set-residual blocks, the message passing step in DMPS is permutation equivariant with respect to the set elements. Since the composition of a permutation equivariant operation and a valid set pooling operation 
%(i.e a permutation invariant pooling operation)
 is permutation invariant, we have the following proposition
\begin{prop}
DMPS is permutation invariant to the order of the elements in the input set. 
\end{prop}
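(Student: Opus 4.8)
The plan is to reduce the statement to the two structural facts the text already isolates: (i) every transformation applied to the per-element feature matrix before pooling --- feature extraction, the construction of $W$, each message-passing / set-denoising / set-residual block, and the per-row linear layers with element-wise nonlinearities --- is permutation \emph{equivariant}; and (ii) the \texttt{pool} operator is permutation \emph{invariant} by assumption. Composing an equivariant stack with an invariant pooling map yields an invariant map, and post-composing with $\kappa$ preserves invariance. Concretely, a reordering of the input set $X=\{x_1,\dots,x_n\}$ corresponds to acting on the feature matrix $X\in\R^{n\times p}$ by a permutation matrix $P\in\{0,1\}^{n\times n}$ via $X\mapsto PX$; I then track how $P$ propagates through Algorithm~\ref{alg:dmps}.

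The key lemma is that the learned weight matrix is equivariant in the sense that replacing $X$ by $PX$ replaces $W$ by $PWP^\top$. This holds because $W$ is assembled entrywise from $K_{i,j}=s(x_i,x_j)=k_\sigma(\mathrm{MLP}(x_i),\mathrm{MLP}(x_j))$ with an MLP that is \emph{shared} across all set elements and a fixed symmetric kernel $k_\sigma$; hence $K\mapsto PKP^\top$, and since \texttt{Softmax} is applied independently to each row, the row-normalization commutes with simultaneously permuting rows and columns, so $W\mapsto PWP^\top$. If the optional threshold $\delta$ is used to sparsify $W$, thresholding is also entrywise and commutes with conjugation by $P$.

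Given the lemma, the message-passing step is equivariant: $(PWP^\top)(PX^t)=PW(P^\top P)X^t=PWX^t$; the convex combination $(1-\gamma)X^t+\gamma WX^t$ in the set-denoising block (and the analogous update in the set-residual block) therefore maps to $P$ times itself, since $\gamma$ is a scalar and $P$ is linear; the subsequent linear layer $X\mapsto XH_t$ acts on the right and hence commutes with left multiplication by $P$; and the element-wise operator $\tau$ commutes with any row permutation. By induction on $t$, after $k$ blocks $X^k\mapsto PX^k$. Finally $\texttt{pool}(PX^k)=\texttt{pool}(X^k)$ by permutation invariance of the pooling operator, so $\hat y=\kappa(\texttt{pool}(X^k))$ is unchanged. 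The argument applies verbatim when the set-denoising block is replaced by a plain message-passing step or by the set-residual block.

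The only place requiring genuine care is the equivariance of the latent graph $W$: one must verify that nothing in the deep-kernel-learning construction breaks symmetry --- the MLP is shared (so it effectively acts row-wise on $X$), $k_\sigma$ is a fixed function of an unordered pair of feature vectors, and the row-wise \texttt{Softmax} (and any thresholding) are applied uniformly --- so that $X\mapsto PX$ forces precisely $W\mapsto PWP^\top$ rather than some other transformation. Everything downstream is then a routine induction, and the feature-extraction stage (e.g.\ a CNN applied independently to each $x_i$) is equivariant for the same "shared, element-wise" reason. $\qed$
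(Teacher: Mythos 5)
Your proposal is correct and follows essentially the same route as the paper, which simply observes that the message-passing step (with or without the set-denoising/set-residual blocks) is permutation equivariant and that composing an equivariant map with an invariant pooling operation yields an invariant function. Your additional verification that $X \mapsto PX$ induces $W \mapsto PWP^{\top}$ through the shared MLP, symmetric kernel, and row-wise softmax is exactly the detail the paper leaves implicit, so it is a fuller write-up of the same argument rather than a different one.
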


\noindent Additionally, being able to approximate any valid set function is desirable for a model. Building upon the results in \citet{zaheer} and \citet{qi}, we have
%show that DMPS is a universal approximator for any valid set function.

\begin{prop}
DMPS is an universal approximator for any permutation invariant function. 
\end{prop}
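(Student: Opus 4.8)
The plan is to show that the DMPS architecture \emph{contains}, up to arbitrarily small error, the Deep Sets / PointNet architecture, and then to invoke the universality results of \citet{zaheer} and \citet{qi} for that architecture. Recall that those works establish that, under the appropriate domain and cardinality conditions (e.g.\ continuous permutation-invariant functions on $\calX^n$ with $\calX$ compact, or the countable-domain construction of \citet{zaheer}), every valid set function $f$ can be written as $f(X) = \rho\bigl(\sum_{i=1}^n \phi(x_i)\bigr)$, or with $\max$ in place of the sum, for continuous $\rho$ and $\phi$, and that $\rho$ and $\phi$ can themselves be approximated to any accuracy by MLPs. It therefore suffices to exhibit a choice of DMPS parameters under which the map $X \mapsto \kappa(\texttt{pool}(X^k))$ reduces to, or uniformly approximates, such a network.

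First I would fix the feature-extraction and readout stages: take the per-element feature extractor of DMPS to be an MLP approximating $\phi$, take \texttt{pool} to be summation (a valid choice in Eq.~\eqref{eq:permutation_invariant_fcn_class}), and take $\kappa$ to be an MLP approximating $\rho$. With these choices, the only remaining obstacle is the stack of $k$ blocks sitting between the feature extractor and the pooling operation: I need these blocks to act as (an approximation of) the identity on the feature matrix, so that $X^k \approx X^0$ and hence $\sum_i X^k_i \approx \sum_i \phi(x_i)$.

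For the \textbf{set-residual block} this holds \emph{exactly}: the block computes $X \mapsto X + \tau\bigl((WX)H_t\bigr)$, so setting the linear-layer weights $H_t = 0$ and using $\tau(0)=0$ (true for ReLU and the usual activations) gives precisely $X \mapsto X$; stacking any number of such blocks leaves $X^0$ unchanged, so DMPS with the set-residual block literally specializes to a Deep Sets / PointNet network, which is universal. For the \textbf{set-denoising block} the identity is only approximable: the block computes $X \mapsto \tau\bigl(((1-\gamma)X + \gamma WX)H_t\bigr)$, so I would first let $\gamma \to 0^+$ (permitted since $\gamma\in(0,1)$ is open), which makes $(1-\gamma)X + \gamma WX = X + \gamma(W-I)X$ uniformly close to $X$ on the compact range of the feature extractor because $W$ is stochastic, and then choose the affine-plus-$\tau$ layer to reproduce the identity up to arbitrary accuracy (using, e.g., the reconstruction $z=\sigma(z)-\sigma(-z)$ for $\sigma=\mathrm{ReLU}$ realized across two consecutive layers, or simply invoking universal approximation for the finitely many affine-plus-$\tau$ layers jointly). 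Since there are only finitely many blocks and all activations and the feature range are uniformly bounded, the per-block errors compound only by a fixed constant, so $X^k$ can be made uniformly as close to $X^0$ as desired.

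To assemble the proof: given $f$ and $\epsilon>0$, pick the Deep Sets / PointNet approximant within $\epsilon/2$ of $f$, then pick DMPS parameters realizing (residual block) or $\epsilon/2$-approximating (denoising block) that network as above; the resulting DMPS model is within $\epsilon$ of $f$, and by the preceding proposition it is permutation invariant, hence a legitimate element of the function class in Eq.~\eqref{eq:permutation_invariant_fcn_class}. The main obstacle is the set-denoising case, where the identity cannot be represented exactly ($W$ coming from a softmax is never $I$, and $\gamma$ cannot be $0$), so the argument must be a genuine limiting/$\epsilon$ argument and one must verify that the errors forced by the nonlinearity and the convex mixing do not accumulate across the stacked blocks, which is precisely where uniform boundedness of the feature range is used. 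A secondary point worth stating explicitly is that ``any permutation invariant function'' and ``universal approximator'' are meant in the sense inherited from \citet{zaheer} and \citet{qi} (continuous functions, uniform approximation on compacta, with their domain/cardinality hypotheses).
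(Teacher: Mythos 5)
Your strategy is the same as the paper's: specialize DMPS so that it reduces to the Deep Sets form $\rho(\sum_i \phi(x_i))$ and invoke the universality results of \citet{zaheer} (and \citet{qi}). The difference is in how the reduction is carried out. The paper's own proof is a two-line argument: it considers vanilla DMPS (no denoising/residual blocks) with a single message passing step and simply takes ``the extreme case in which the estimated weight matrix $W$ is the identity matrix,'' under which the model is literally of the Deep Sets form. Your version is more careful on exactly the point the paper glosses over: since $W$ is produced by a softmax row-normalization of a kernel matrix, it can never equal $I$ exactly, so the paper's ``extreme case'' is really only a limit; you make this explicit and handle it with a genuine $\epsilon$-argument (using that $W$ is stochastic and the feature range is bounded, so $k$ blocks compound the error by a fixed constant), and you additionally show the reduction for both proposed blocks --- exactly, via $H_t=0$ in the set-residual block, and approximately for the set-denoising block as $\gamma\to 0^+$. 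So the two proofs share the same decomposition and the same key citation, but yours is the more rigorous rendering (and covers the block variants the paper's proof excludes), while the paper's buys brevity at the cost of an idealization ($W=I$) that its own construction cannot realize exactly; your closing caveat about inheriting the domain/cardinality hypotheses and the notion of approximation from \citet{zaheer} is also something the paper leaves implicit.
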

\begin{proof}
See supplementary material.
\end{proof}

\begin{figure*}[!t]
\centering
\begin{subfigure}{.29\textwidth}
    \centering
    \includegraphics[width=1\linewidth]{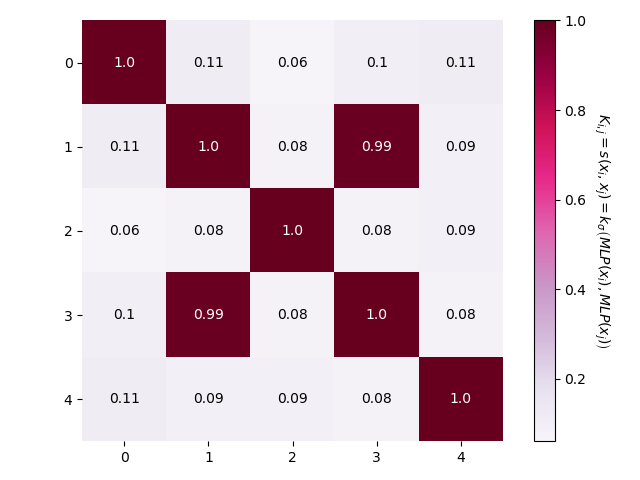}
    \caption{Learned kernel matrix---$N(\textbf{0}, \Sigma$)}
    \label{fig:sigma}
\end{subfigure}%
\begin{subfigure}{.29\textwidth}
    \centering
    \includegraphics[width=1\linewidth]{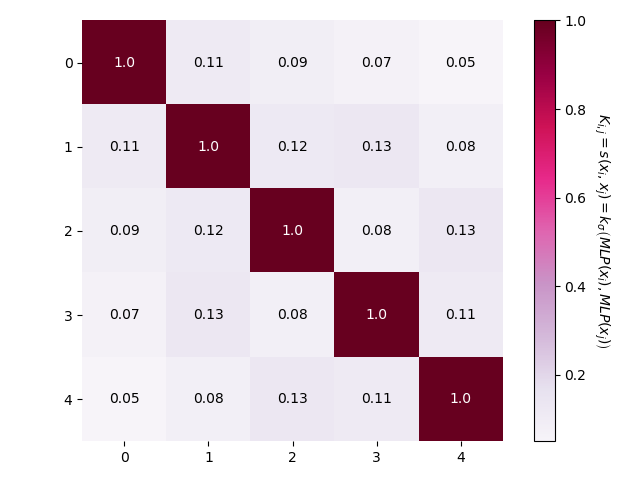}
    \caption{Learned kernel matrix---$N(\textbf{0},\textbf{I})$}
    \label{fig:identity}
\end{subfigure}
\begin{subfigure}{.29\textwidth}
    \centering
    \includegraphics[width=1\linewidth]{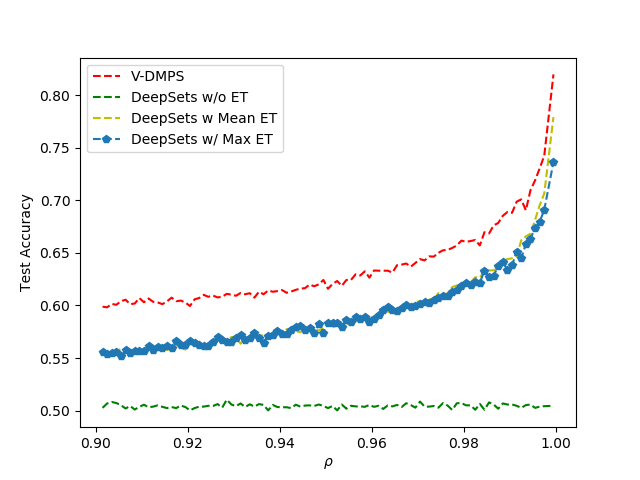}
    \caption{Test results with different $\rho$}
    \label{fig:result}
\end{subfigure}
\caption{Fig. (a) and (b) show the learned kernel matrices with input sampled from the respective two normal distributions at test time, and Fig. (c) depicts how the testing accuracy varies with $\rho$.}
\label{fig:weight_rec}
\end{figure*}

\section{Related Work}
\label{sec:related_work}

%The success of DMPS is not a coincidence, as it is closely related to some other networks that have been proposed specifically for relational learning. Furthermore, many methods in the current set learning literature can also be interpreted in terms of message passing on the latent graph of the set. This section discusses such connections and interpretations. 

\subsection{Relational and Non-local Reasoning}
\label{subsec:relation_networks}

%\cite{santoro} proposed the relation networks that is primed for relational learning. 
\begin{itemize}
    \item \textbf{Relational Reasoning}: %Given a set of objects $O = \{o_1, o_2, \cdots, o_n\}$, the relation network~\cite{santoro} learns a composite function $f(O) = f_{\phi} \left(\sum_{i,j}g_{\theta}\left(o_i,o_j\right)\right)$ where $g_{\theta}$ is a function that infers the relation between a pair of objects and $f_{\phi}$ is a function that transforms the aggregated pairwise relations to the output space. 
    Our proposed approach fits and generalizes the relation network~\cite{santoro}. To elaborate, 
    %If we do not use a non-linear operator after the linear layer (i.e, transforming the feature matrix only using a learnable parameter $H_t$) 
    if we do not perform any feature transformation after message passing and define the set pooling operator as the sum operator, a model comprised of one message passing step would result in $f(X) = \kappa\left(\sum_{i,j} w_{i,j}x_j^T H_1\right)$ where $x_j^T$ is the $j$-th row of the feature matrix $X$. This agrees with the form suggested in~\citet{santoro} if we let $g_{\theta}(x_i,x_j) = w_{i,j} x_j^T H_1$. Therefore, our proposed network can be seen as a generalization of the relation network in the sense that stacking multiple message passing steps or set blocks enables us to learn high-order relations among the set elements. 
    \item \textbf{Non-local Networks}: While non-local relational reasoning, whose main objective is to extend the frameworks of local learning schemes like the convolution operator or the fully-connected layer to allow for non-local learning, has been proposed in earlier work \cite{wang}, DMPS is an extension of that to set-structured data where our goal is to leverage non-trivial relations among set elements. We also proposed a new way of learning the similarity function, $s(x_i,x_j)$, that infers pairwise relations, namely \textit{deep kernel learning}. Last but not least, in order to further strengthen the flexibility of our model, we coupled DMPS with the set-denoising and set-residual blocks, motivated by the diffusion dynamics and the residual network, respectively, whereas \citet{wang} only considered residual connections. 
\end{itemize}

\subsection{Learning Deep Networks} 

The network proposed by \citet{he} allows for sensible training of deep networks, an advantage that is inherited by the set-residual block. 
%The set-residual block is an extension of the residual network to set-structured data. 
While the set-denoising block shares certain structural similarities with the highway network~\cite{srivastava}, it was introduced in a different context, namely through the lens of diffusion on graphs and the avoidance of over-smoothing. In retrospect, the connection between diffusion dynamics and the set-denoising block established in this work provides another useful interpretation of the highway network in terms of relational learning. Last but not least, the set-denoising and set-residual blocks are introduced specifically to handle set-structured data, an attribute that is beyond the scope of the residual network or the highway network.

\subsection{Learning on Latent Graphs of Sets}
\label{subsec:learning_on_latent_graphs}

We now refer back to Eqn.~\ref{eq:permutation_invariant_fcn_class}. A special case of $\phi_X$, as it can be seen in \citet{qi} for example, acts on set elements independently. Notwithstanding, this can still be considered a trivial form of message passing on a latent graph whose edge set is empty, including the approaches in \citet{qi} and \citet{zaheer} as special cases. The set transformer \cite{lee}, by directly applying the transformer~\cite{Vaswani} on set-structured data, essentially uses a different $W$ for every message passing step, and there are effectively many different $W$'s at each step, since each attention head uses different keys/values. Unlike the transformer, DMPS places sensible restrictions on $W$ that are well-motivated: they are based on physics (diffusion dynamics) and graph learning (static latent graph), easier to analyze/visualize because there is only a single W, and have fewer free parameters, which means DMPS might generalize better given limited data and explains the experimental advantage shown by DMPS compared to the set transformer. 

%On the other hand, the set transformer introduced in \citet{lee} (after projecting the feature vectors to different subspaces via a \textit{multihead attention} mechanism \cite{Vaswani}) updates each projected feature vector with a weighted sum of all the other projected feature vectors in the same subspace. This corresponds to a fully-connected latent graph with edge weights determined by the query-key scores, although the final concatenation of features vectors from different subspaces makes the set transformer differ from the message passing scheme introduced in this work.

\section{Experiments}
\label{sec:experiments}

We apply DMPS and its extensions to a range of synthetic-toy and real-world datasets. For each experiment, we compare our methods against, to the best of our knowledge, the state-of-the-art results for that dataset. Unless otherwise specified, three message passing steps, set-denoising blocks, or set-residual blocks are stacked to form the final model. 
%``Vanilla DMPS'' refers to DMPS with the original message passing scheme proposed in Sec.~\ref{subsec:message_passing_on_sets}, i.e DMPS without the denoising or the residual block. ``R-DMPS''/``D-DMPS'' refer to DMPS with residual/denoising block, respectively. ``FT'' and ``LT'' denote fixed and learnable time-step, respectively. 
Furthermore, we adopt the following abbreviations in this section: 
%DS w/o ET---plain deep sets, i.e deeps sets without any equivariant transformations; DS w/ (Mean or Max) ET---deep sets with (mean or max) equivariant transformations \cite{zaheer}; ST---set transformer \cite{lee};
ET---equivariant transformation \cite{zaheer}; V-DMPS---vanilla DMPS, i,e the block component being the message passing step; R-DMPS---the block component being the set-residual block; D-DMPS w/(FDC or LDC)---the block component being the set-denoising block with (fixed or learnable, respectively) diffusion coefficient; and UG---uniform graph, i.e a fully-connected, undirected graph with equal weights for all edges.

%\todo{JO: We are classifying sets, not graphs. Make sure that is clear in the rewrite of this section. E.g. don't call the section "Classifying Gaussian Graphs." I still feel like this section is clunky as written. For example, the phrase "To fit into the framework of learning on sets" is very odd. You are creating a synthetic dataset, it is your choice how you create it. Something like "we create sets of 5 reals by drawing the 5 set elements as a vector random-variable from one of two Gaussians..."}
\subsection{Classifying Gaussian Sets}

In this experiment we investigate a traditional problem of classifying random samples drawn from two different multivariate Gaussian distributions with the same mean and different covariance matrices. We create sets of real numbers by drawing the set elements as vector random samples from one of the two Gaussian distributions. The latent graph underlying each set is thus determined by the covariance matrix of its corresponding Gaussian distribution.
%, deviating from the traditional Gaussian graphical model in which the graph structure encodes conditional independence through the precision matrix. 
%In other words, given a set $\textbf{X} = [X_1, X_2, \dots, X_p]^T$ where $\textbf{X} \sim N(\mu, \Sigma)$, there exists a corresponding latent graph in which each element $X_i$ is a node and an edge exists between two elements $X_i$ and $X_j$ if and only if they are not independent $\left(\text{i.e}, \Sigma_{i,j} \neq 0\right)$. 
To use the covariance matrix as the ground truth for the latent graph, our main goal is to test DMPS's ability to capture and recover the true relational structure that underpins each set. Given a set $\textbf{X} = [X_1, X_2, \dots, X_p]^T$ where $\textbf{X} \sim N(\mu, \Sigma)$, it is worth emphasizing that the order among the elements in $\textbf{X}$ does not matter, as $\textbf{X}$ is permutation equivariant with respect to its mean and covariance. The canonical order is used here for convenience, although the result would be the same if one were to permute the order upfront. We next describe the experiment in more detail.

We sample input sets from two 5-dimensional Gaussian distributions $N(\mathbf{0},\textbf{I})$ and $ N(\mathbf{0},\Sigma)$.
%, and try to classify each set to its corresponding distribution. 
To further test DMPS's capability to apprehend sparse relational signals, we choose $\Sigma$ to be the same as the identity matrix except at a (randomly) chosen pair of indices $\left((2,4), \text{in this case}\right)$ in which $\Sigma_{2,4} = \Sigma_{4,2} = \rho$ and $\rho\in [0,1)$. This is to say, the only, yet subtle difference between sets drawn from those two distributions is that the values of the second and the fourth elements
%\todo{You're still talking about dimensions, even though we have sets. This is going to be very confusing to a reader.} 
are positively correlated for sets drawn from $N(\textbf{0},\Sigma)$, a relational information that can only be captured if the elements in the sets are modeled interactively. 
Fig.~\ref{fig:sigma} and~\ref{fig:identity} showcase the latent graphs recovered by DMPS at test time with input sets sampled from the two chosen distributions, respectively, and with $\rho = 0.95$, while Fig.~\ref{fig:result} conveys how the test results vary with different choices of $\rho$.  

We have shown the following advantages of DMPS through this experiment: a). DMPS is able to take advantage of non-trivial covariance relations, thus outperforming methods that do not explicitly take such information into account. Furthermore, the trend of the curve in Fig.~\ref{fig:result} confirms the intuition that DMPS performs better when the underlying relational signal gets stronger; b). in contrast to other relational learning methods that focus on sets like \citet{lee}, DMPS is intuitively interpretable in that the kernel matrix learned through the \textit{latent graph learning} recovers the covariance structure of the underlying Gaussian distribution. This property of DMPS is highly desirable, and is also consistent with theoretical probability as functional transformations of random variables preserve the independence and covariance relations among those variables (the network acts on each set element, i.e each dimension of the Gaussian random sample, independently before the message passing step).

\subsection{Counting Unique Characters}
\label{subsec:counting_unqiue_characters}
\begin{figure}[!h]
\centering
\includegraphics[width=0.7\linewidth]{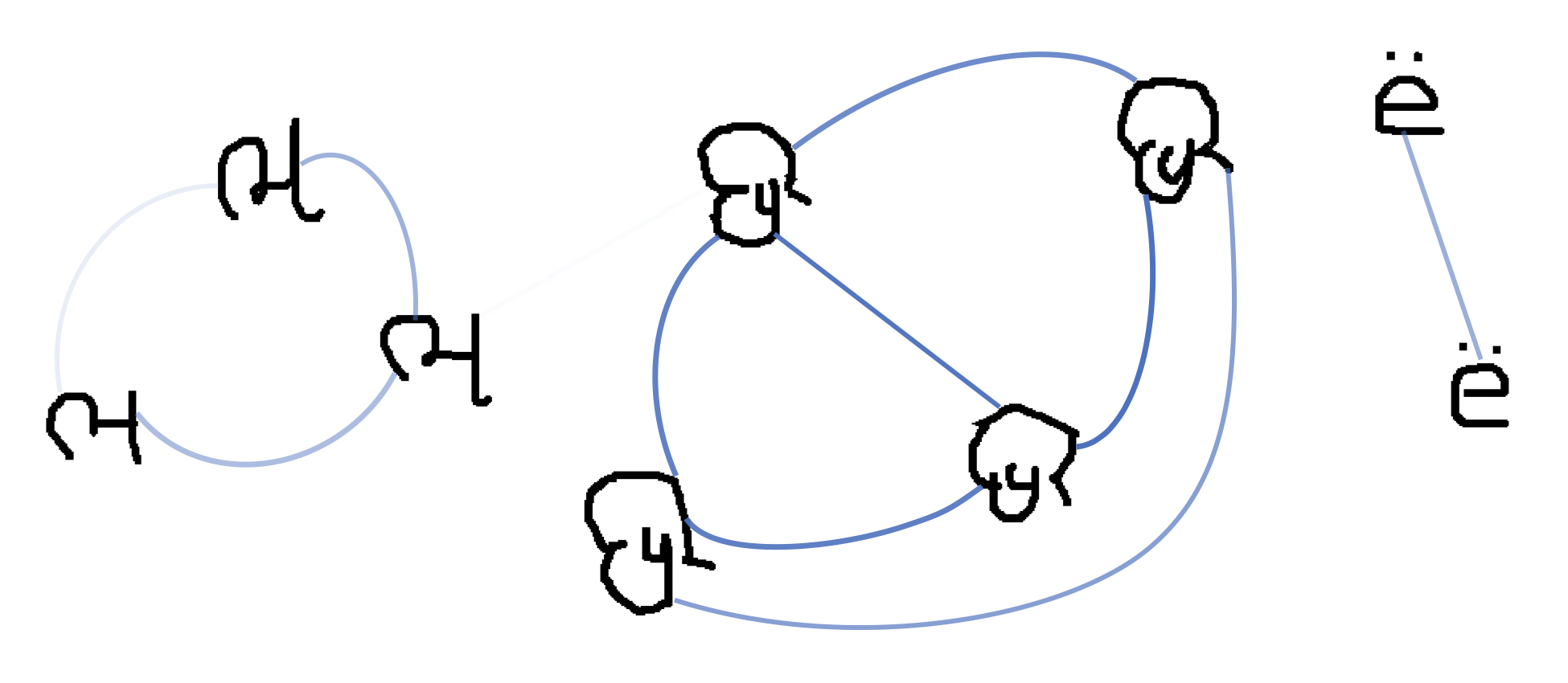}
\caption{Learned latent graph for a test input set with nine images categorized by three unique characters. The color transparency of the edges is proportional to their learned weights, with heavier-colored edges carrying larger weights.}
\label{fig:char_graph}
\end{figure}
To test the model's ability to model set-structured data relationally, \citet{lee} proposed the task of counting unique characters using the characters dataset \cite{lake}, where the goal is to predict the number of unique characters in an input set of character images. %The character dataset consists of 1,623 characters from various alphabets, with 20 images for each character. 
Please refer to the supplementary material for detailed experimental setup. Tab.~\ref{tab:counting} shows the testing results. We emphasize that we align as much architectural choices, such as learning rate, number of training batches, batch size, etc., as we can with \citet{lee} for fair comparison. We make some additional comments below.
%For completeness, we briefly re-illustrate the task. 

%We first split all the characters along with their corresponding images into collections of training and testing images, with half of the images for each character in the training collection and the other half in the testing collection. An input set for training is generated by sampling between 6 to 10 images from the collection of training images (same procedure to generate a testing input set). Poisson regression is used for prediction, with the mode of the Poisson distribution, which is the output of our network, being used as the prediction for the number of unique characters. Furthermore, we use the log-likelihood of the Poisson distribution as our loss function, i.e., we train the network to maximize the Poisson log-likelihood. Tab.~\ref{tab:counting} shows the testing results. We emphasize that we align as much architectural choices, such as learning rate, number of training batches, batch size, etc., as we can with \citet{lee} for fair comparison. We make some additional comments below.
\begin{table}[h]
  \begin{center}
    \caption{Counting unique characters}
    \label{tab:counting}
    \begin{tabular}{l|c} % <-- Alignments: 1st column left, 2nd middle and 3rd right, with vertical lines in between
      \textbf{Architecture} & \textbf{Test Accuracy}\\
      \hline
      DeepSets w/ Mean ET & 0.4617 $\pm$ 0.0076 \\
      DeepSets w/ Max ET & 0.4359 $\pm$ 0.0077\\
      Set Transformer &  0.6037 $\pm$ 0.0075  \\
      \hline
      V-DMPS w/ UG &  0.1357 $\pm$ 0.0000\\
      D-DMPS w/ LDC \& UG & 0.4661 $\pm$ 0.0085\\
      \hline
      V-DMPS & 0.6446 $\pm$ 0.0174\\
      R-DMPS & 0.6600 $\pm$ 0.0103\\
      D-DMPS w/ FDC & \textbf{0.6748 $\pm$ 0.0120}\\
      D-DMPS w/ LDC & 0.6674 $\pm$ 0.0080\\
    \end{tabular}
  \end{center}
\end{table}

Firstly, DMPS and its variants outperform other methods by significant margins, showing the effectiveness of our proposed model. Secondly, since relational learning is the centerpiece of our model, we perform two ablation studies (V-DMPS w/ UG and D-DMPS w/ LDC \& UG) where we perform message passing on the uniform graph instead of the learned latent graph. The idea is that regular DMPS and its variants should outperform models with a fixed uniform graph if the learned latent graph indeed captures useful relational information. As shown in Tab.~\ref{tab:counting}, V-DMPS w/ UG performs poorly, while D-DMPS w/ LDC \& UG does better (perhaps because of the less erroneous smoothing induced by the set-denoising block) it still does not perform as well. This shows the significance of proper relational learning provided by regular DMPS and its variants. To further demonstrate the interpretability of our approach, Fig.~\ref{fig:char_graph} shows the learned latent graph for a test input set with nine images categorized by three unique characters. We see that the learned latent graph in which three clusters emerge delineates the relations among the set elements in a reasonable manner, underscoring a straightforward intuition that images corresponding to the same character are more closely related.

\subsection{Point Cloud Classification}
\label{subsec:point_cloud_classification}

%\begin{table}[h!]
\begin{table}[h!]
%\vspace{-1cm}
  \begin{center}
    \caption{ModelNet 40 Classification Task}
    \label{tab:ModelNet}
    \small
    \begin{tabular}{l|c|c} % <-- Alignments: 1st column left, 2nd middle and 3rd right, with vertical lines in between
      \textbf{Architecture} & \textbf{100 points} & \textbf{1000 points}\\
      \hline
      %Deep Sets \cite{zaheer} with Equivariant Transformation & 0.82 & 0.87 %\\
      %Set Transformer & 0.8454  & 0.8915\\
      %Vanilla DMPS & 0.8367 &  0.8820\\
      %DMPS with Residual Blocks & 0.8417 & 0.8800\\
      %DMPS with Denoising Blocks and Fixed Time-Step  & 0.8532 & 0.8820\\
      %DMPS with Denoising Blocks and Learnable Time-Step  & \textbf{0.8540} & \textbf{0.8927}\\
      DeepSets w/ Max ET & 0.82 $\pm$ 0.02 & 0.87 $\pm$ 0.01\\
      Set Transformer & 0.8454 $\pm$ 0.0144  & 0.8915 $\pm$ 0.0144\\
      \hline
      PointNet++ & --- & \textbf{0.907 $\pm$ ---}\\
      \hline
      V-DMPS & 0.8367 $\pm$ 0.0047 &  0.8751 $\pm$ 0.0029\\
      R-DMPS & 0.8475 $\pm$ 0.0036 &  0.8935 $\pm$ 0.0016\\
      D-DMPS w/ FDC & 0.8564 $\pm$ 0.0031 & 0.8783 $\pm$ 0.0032\\
      D-DMPS w/ LDC & \textbf{0.8571 $\pm$ 0.0062} & 0.8798 $\pm$ 0.0020\\
    \end{tabular}
  \end{center}
%\vspace{-6mm}
%\end{table}
\end{table}

%A point cloud is a set of 3d vectors, a type of data that is frequently used, for example, in  robotics and computer vision. Using learning approaches for sets, it becomes possible to directly handle such set-structured vectorial data. 
We apply DMPS and its variants to the ModelNet40 dataset~\cite{chang}, which contains objects that are represented as sets of 3d points (point clouds) in 40 different categories. Our model allows us to directly model sets of 3d points. 
%In other words, given a set of 3d points (an object), our goal is to classify the object category. 
%The dataset provided consists of 9843 training and 2468 testing samples, each belonging to one of the 40 object classes. We train our model on the training samples and test the performance on the testing samples. 
For this experiment, we construct input sets with sizes 100 and 1,000 points per set by uniformly sampling from the mesh representations of the objects.
%and then normalizing elements in each set to have zero mean and at most length one (i.e., we normalize the lengths of the set elements so that the element with the largest length in each set has length 1). 
Tab.~\ref{tab:ModelNet} shows the test performances of our approaches compared with other state-of-the-art methods that directly operate on raw point clouds. We make some additional comments below. 

\begin{figure}[h]
\centering
\includegraphics[width=1.07\linewidth]{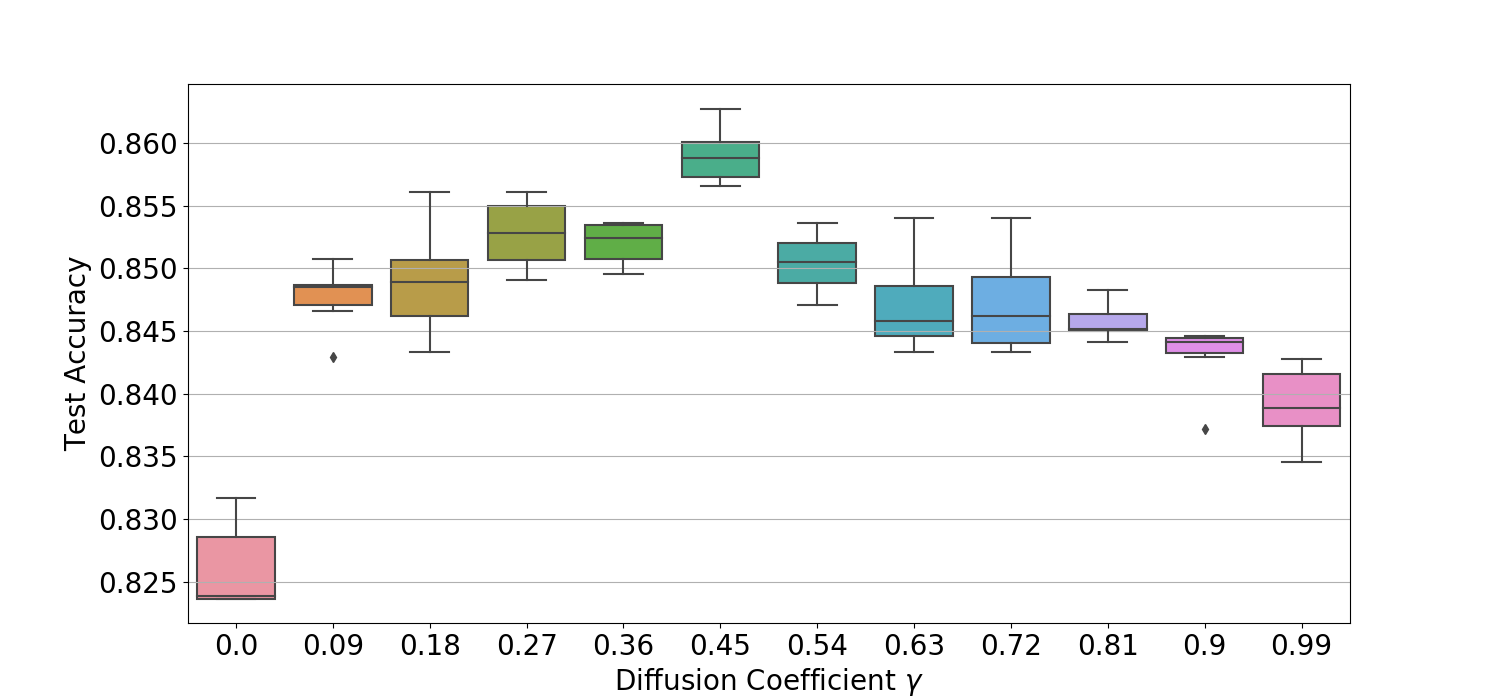}
\caption{Depiction of how the diffusion coefficient $\gamma$ affects the test accuracy in the case of 100 points per set.}
\label{fig:ts_study}
\end{figure}

Firstly, we point out that the task is harder when there are fewer points in the input set, thus requiring more efficient relational learning. We observe that D-DMPS w/ LDC outperforms other methods by significant margins in the case of 100 points in the input set. As for the case of 1000 points, PointNet++ achieved the state-of-the-art among methods that directly process raw point clouds. Our method performs on par with the set transformer \cite{lee}, and outperforms deep sets \cite{zaheer} by leveraging relational information. Secondly, to investigate the importance of balancing between appropriate message passing and over-smoothing, we perform a study in which we fix the diffusion coefficient to various values and see how the test accuracy varies. Fig.~\ref{fig:ts_study} shows the result. As $\gamma$ ranges from 0 to 1, the model effectively ranges from DeepSets w/o ET to V-DMPS, with anywhere in-between being D-DMPS w/ FDC at that particular $\gamma$. The test accuracy peaks when $\gamma$ approaches 0.5, and decreases when $\gamma$ becomes either too large or too small. This shows the significance of controlling the degree of smoothing. Along with the update equation, the novelty of the set-denoising block is affirmed theoretically and empirically.

\subsection{Histopathology Dataset}
\label{subsec:histopathology dataset}

\begin{table}[h!]
  \begin{center}
    \caption{Breast Cancer}
    \label{tab:hist}
    \begin{tabular}{l|c|r} % <-- Alignments: 1st column left, 2nd middle and 3rd right, with vertical lines in between
      \textbf{Architecture} & \textbf{Test Accuracy}\\
      \hline
      Attention & 0.745 $\pm$ 0.018\\
      Gated Attention & 0.755 $\pm$ 0.016\\
      \hline
      V-DMPS & 0.800 $\pm$ 0.023\\
      R-DMPS & 0.818 $\pm$ 0.029\\
      D-DMPS w/ FDC & \textbf{0.846 $\pm$ 0.019}\\
      D-DMPS w/ LDC & 0.836 $\pm$ 0.023\\
    \end{tabular}
  \end{center}
\end{table}

The concept of learning on sets also applies well to \textit{weakly-labeled data}. In this section we perform experiment on classifying weakly-labeled real-life histopathology images provided in the breast cancer dataset \cite{gelasca}. 
%In computational histopathology, H\&E images are generally considered as weakly-labeled, because only a single label is given to a usually large-sized image, while pixel-level annotations are not available. 
A common approach is to divide an image into smaller patches and think of the patches as a set of "small images" with a single label for the set. 

The breast cancer dataset introduced in \citet{gelasca} consists of 58 weakly-labeled 896$\times$768 H\&E images. An image is labeled malignant if it contains breast cancer cells; otherwise it is labeled benign. We follow a similar procedure to pre-process the images as in \citet{max}. We divide the images into 32$\times$32 patches, which results in 672 patches per set (i.e., per image). Furthermore, because of the small number of available images, we perform data augmentation at the training stage by randomly rotating and mirroring the patches. We point out that \citet{max} also randomly adjusted the amount of H\&E by decomposing the RGB color of the tissue into the H\&E color space. We compare the performances of DMPS and its variants to the \textit{attention} and \textit{gated attention} models introduced in \citet{max}. The testing results are shown in Tab.~\ref{tab:hist}. Despite the framework of multiple instance learning, and thus the attention scheme, being particularly suitable to computational histopathology \cite{kandemir,max}, we see that DMPS and its variants perform uniformly better by significant margins. 

\section{Conclusion and Future Work}
\label{sec:conclusions}

In this paper, we introduced DMPS, a set-learning scheme that explicitly takes interactions among set elements into account when modeling set-structured data. To the best of our knowledge, our model is the first attempt to incorporate relational learning on sets via message passing on the latent graph of the input set. We also proposed two variants of message passing on sets, the set-denoising block and the set-residual block. Although this is a step towards relational learning on sets, there are many possible extensions. For example, the message passing scheme can be interpreted as a gradient descent step based on the weighted Dirichlet integral with the functional two-norm. Would one, for example, discretize an energy of the form $\int w(x)\|\nabla u(x)\|_2~dx$ instead, we would obtain a form of weighted total-variation message passing. Hence, one interesting future work would be to derive a family of message passing algorithms by changing the functional two-norm to the more general functional p-norm and to explore the behavior of the resulting message passing schemes. 

%Using robust norms~\cite{black1996robust} would also be interesting. Another interesting direction is to reduce the cost of forming the weight matrix of the latent graph. Since we are using deep kernel learning to estimate $W$, a viable way would be to use random Fourier features to estimate the kernel function so that the weight matrix $W$ will never have to be formed during the message passing step. 

\bibliography{bibfile.bib}

\begin{thebibliography}{}

\bibitem[\protect\citeauthoryear{Chang \bgroup et al\mbox.\egroup
  }{2015}]{chang}
Chang, A.~X.; Funkhouser, T.~A.; Guibas, L.~J.; Hanrahan, P.; Huang, Q.-X.; Li,
  Z.; Savarese, S.; Savva, M.; Song, S.; Su, H.; Xiao, J.; Yi, L.; and Yu, F.
\newblock 2015.
\newblock Shapenet: An information-rich 3d model repository.
\newblock {\em ArXiv} abs/1512.03012.

\bibitem[\protect\citeauthoryear{Gelasca \bgroup et al\mbox.\egroup
  }{2008}]{gelasca}
Gelasca, E.~D.; Byun, J.; Obara, B.; and Manjunath, B.~S.
\newblock 2008.
\newblock Evaluation and benchmark for biological image segmentation.
\newblock {\em 2008 15th IEEE International Conference on Image Processing}
  1816--1819.

\bibitem[\protect\citeauthoryear{Gilmer \bgroup et al\mbox.\egroup
  }{2017}]{gilmer}
Gilmer, J.; Schoenholz, S.~S.; Riley, P.~F.; Vinyals, O.; and Dahl, G.~E.
\newblock 2017.
\newblock Neural message passing for quantum chemistry.
\newblock In {\em ICML}.

\bibitem[\protect\citeauthoryear{Hamilton, Ying, and Leskovec}{2017}]{ham}
Hamilton, W.~L.; Ying, R.; and Leskovec, J.
\newblock 2017.
\newblock Representation learning on graphs: Methods and applications.
\newblock {\em IEEE Data Eng. Bull.} 40:52--74.

\bibitem[\protect\citeauthoryear{He \bgroup et al\mbox.\egroup }{2016}]{he}
He, K.; Zhang, X.; Ren, S.; and Sun, J.
\newblock 2016.
\newblock Deep residual learning for image recognition.
\newblock {\em 2016 IEEE Conference on Computer Vision and Pattern Recognition
  (CVPR)}  770--778.

\bibitem[\protect\citeauthoryear{Ilse, Tomczak, and Welling}{2018}]{max}
Ilse, M.; Tomczak, J.~M.; and Welling, M.
\newblock 2018.
\newblock Attention-based deep multiple instance learning.
\newblock In {\em ICML}.

\bibitem[\protect\citeauthoryear{Kandemir and Hamprecht}{2015}]{kandemir}
Kandemir, M., and Hamprecht, F.~A.
\newblock 2015.
\newblock Computer-aided diagnosis from weak supervision: A benchmarking study.
\newblock {\em Computerized medical imaging and graphics : the official journal
  of the Computerized Medical Imaging Society} 42:44--50.

\bibitem[\protect\citeauthoryear{Lake, Salakhutdinov, and
  Tenenbaum}{2015}]{lake}
Lake, B.~M.; Salakhutdinov, R.; and Tenenbaum, J.~B.
\newblock 2015.
\newblock Human-level concept learning through probabilistic program induction.
\newblock {\em Science} 350:1332--1338.

\bibitem[\protect\citeauthoryear{Lee \bgroup et al\mbox.\egroup }{2019}]{lee}
Lee, J.; Lee, Y.; Kim, J.; Kosiorek, A.~R.; Choi, S.; and Teh, Y.~W.
\newblock 2019.
\newblock Set transformer: A framework for attention-based
  permutation-invariant neural networks.
\newblock In {\em ICML}.

\bibitem[\protect\citeauthoryear{Qi \bgroup et al\mbox.\egroup }{2016}]{qi}
Qi, C.~R.; Su, H.; Mo, K.; and Guibas, L.~J.
\newblock 2016.
\newblock Pointnet: Deep learning on point sets for 3d classification and
  segmentation.
\newblock {\em 2017 IEEE Conference on Computer Vision and Pattern Recognition
  (CVPR)}  77--85.

\bibitem[\protect\citeauthoryear{Santoro \bgroup et al\mbox.\egroup
  }{2017}]{santoro}
Santoro, A.; Raposo, D.; Barrett, D. G.~T.; Malinowski, M.; Pascanu, R.;
  Battaglia, P.~W.; and Lillicrap, T.~P.
\newblock 2017.
\newblock A simple neural network module for relational reasoning.
\newblock In {\em NIPS}.

\bibitem[\protect\citeauthoryear{Srivastava, Greff, and
  Schmidhuber}{2015}]{srivastava}
Srivastava, R.~K.; Greff, K.; and Schmidhuber, J.
\newblock 2015.
\newblock Highway networks.
\newblock {\em ArXiv} abs/1505.00387.

\bibitem[\protect\citeauthoryear{Troutman}{1995}]{troutman2012variational}
Troutman, J.~L.
\newblock 1995.
\newblock {\em Variational Calculus and Optimal Control: Optimization with
  Elementary Convexity}.

\bibitem[\protect\citeauthoryear{Vaswani \bgroup et al\mbox.\egroup
  }{2017}]{Vaswani}
Vaswani, A.; Shazeer, N.; Parmar, N.; Uszkoreit, J.; Jones, L.; Gomez, A.~N.;
  Kaiser, L.; and Polosukhin, I.
\newblock 2017.
\newblock Attention is all you need.
\newblock In {\em NIPS}.

\bibitem[\protect\citeauthoryear{Vinyals, Bengio, and Kudlur}{2015}]{vinyals}
Vinyals, O.; Bengio, S.; and Kudlur, M.
\newblock 2015.
\newblock Order matters: Sequence to sequence for sets.
\newblock {\em CoRR} abs/1511.06391.

\bibitem[\protect\citeauthoryear{Wang \bgroup et al\mbox.\egroup }{2017}]{wang}
Wang, X.; Girshick, R.~B.; Gupta, A.; and He, K.
\newblock 2017.
\newblock Non-local neural networks.
\newblock {\em 2018 IEEE/CVF Conference on Computer Vision and Pattern
  Recognition}  7794--7803.

\bibitem[\protect\citeauthoryear{Weickert}{1998}]{weickert1998anisotropic}
Weickert, J.
\newblock 1998.
\newblock {\em Anisotropic diffusion in image processing}, volume~1.
\newblock Teubner Stuttgart.

\bibitem[\protect\citeauthoryear{Wilson \bgroup et al\mbox.\egroup
  }{2015}]{wilson}
Wilson, A.~G.; Hu, Z.; Salakhutdinov, R.; and Xing, E.~P.
\newblock 2015.
\newblock Deep kernel learning.
\newblock In {\em AISTATS}.

\bibitem[\protect\citeauthoryear{Zaheer \bgroup et al\mbox.\egroup
  }{2017}]{zaheer}
Zaheer, M.; Kottur, S.; Ravanbakhsh, S.; P{\'o}czos, B.; Salakhutdinov, R.; and
  Smola, A.~J.
\newblock 2017.
\newblock Deep sets.
\newblock In {\em NIPS}.

\end{thebibliography}
\bibliographystyle{aaai}

\section{Supplementary Material}

\subsection{Derivations for the Diffusion Update Equation}

To motivate the set-denoising block, we first illustrate denoising via diffusion, which is related to Gaussian smoothing in the continuum~\cite{weickert1998anisotropic}. In the continuum the Dirichlet energy
\begin{equation}
 E\left(u(x)\right) = \frac{C}{2}\int ||\nabla u(x)||_2^2\ dx\,,
\label{eq:dirichlet_energy}
\end{equation}
where $u$ is a multivariate function and $C$ is a constant has the Euler-Lagrange equation~\cite{troutman2012variational}: $0=-\Delta u$, where $\Delta$ is the Laplacian. A gradient descent solution is given by the heat equation $u_t(x,t)=\Delta u(x,t); u(x,0)=u_0(x)$, which for $t\rightarrow\infty$ minimizes the Dirichlet energy~\eqref{eq:dirichlet_energy}. Note that $u(x,T)$ is equivalent to smoothing the initial condition $u_0(x)$ with a Gaussian with variance $\sigma^2=2T$~\cite{weickert1998anisotropic}. Hence, this is a form of smoothing or denoising. 

To further comprehend and possibly extend message passing on sets, we illustrate it from a diffusion point-of-view. We first generalize the Dirichlet energy to an anisotropic energy of the form
\begin{equation}
E\left(u(x)\right) =  \frac{C}{2}\int w(x)||\nabla u(x)||_2^2\ dx,~w(x)\geq 0.
\label{eq:dirichlet_energy_anistropic}
\end{equation}
This energy can be discretized for graphs as
\begin{equation}
E(\{x_i\}) = \frac{C}{2} \sum_{(i,j)\in E} w_{i,j}||x_i - x_j||_2^2,
\label{eq:discrete_dirichlet}
\end{equation}
where we replaced the function $u(x)$ by discrete feature vectors $x_i$ defined for each node $i$; we also replaced the gradient by a finite difference. We assume that our graph is undirected and we do not double-count edges. Such an energy is related to anisotropic diffusion equations~\cite{weickert1998anisotropic} which have for example been widely investigated for edge-preserving image-denoising. 

The gradient of Eq.~\eqref{eq:discrete_dirichlet} is
\begin{equation}
    \frac{\partial E}{\partial x_i} = C \sum_{j:j\in N(i)} w_{i,j} (x_i-x_j). 
\end{equation}
If we assume that $w_{i,j}=0$ if $(i,j)\notin E$ then we can write the equivalent of the heat equation as
\begin{equation}
    (x_i)_t = -C \sum_j w_{i,j}(x_i-x_j),
\end{equation}
where the right hand side corresponds to the discrete weighted Laplacian. Discretizing time with an Euler-forward approximation, we obtain
\begin{equation}
    x_i^{t+1} = x_i^t - \delta t C \sum_j w_{i,j} (x_i^t-x_j^t),
\end{equation}
where $\delta t$ is the time step and which (as $\sum_j w_{i,j}=1$ for our normalized kernel) can be written as
\begin{eqnarray}
    x_i^{t+1} &=& (1-\delta t C \sum_j w_{i,j})x_i^t + \delta t C \sum_j w_{i,j} x_j^t,\\
    &=& (1-\delta t C)x_i^t + \delta t C \sum_j w_{i,j} x_j^t.\label{eq:discrete_anisotropic_heat_equation}
\end{eqnarray}

Hence, the update rule is a convex combination of the current node feature $x_i^t$ and the update feature obtained by the message passing step. Subject to time-step constraints on $\delta t$, larger $\delta t$ will result in a smoother solution (in analogy to $\sigma^2=2T$ for the heat equation above). If we choose the constant $C$ and the time step $\delta t$ such that $C\delta t = 1$, we recover the standard message passing step described before. This observation offers us another interpretation of DMPS: message passing with weight matrix $W$ is equivalent to diffusing each set element based on the entire set, i.e, updating each set element by a weighted average of the entire set. 
%To take a step further, we make $\delta t C$ a learnable parameter, allowing the model to adaptively select the diffusion dynamics. 

%Note also that this relationship between a denoising form of message passing and diffusion opens up the possibility for an entirely new family of message passing algorithms based on the choice of the energy function. Would one, for example, discretize an energy of the form $\int w(x)\|\nabla u(x)\|_2~dx$ instead, we would obtain a form of weighted total-variation message passing. Using robust norms~\cite{black1996robust} would also be interesting.

\section{Proof}
\begin{prop}
DMPS is a universal approximator of any permutation invariant function. 
\end{prop}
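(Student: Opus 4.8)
The plan is to show that the DMPS architecture contains, as a special (limiting) case, the sum-pooling architecture of \citet{zaheer} (equivalently the PointNet architecture of \citet{qi}), and then to invoke the universal-approximation guarantees already established for those architectures. Recall that \citet{zaheer} show that any continuous permutation-invariant function $f$ on sets of bounded cardinality decomposes as $f(X) = \rho\bigl(\sum_{x\in X}\phi(x)\bigr)$ for suitable continuous $\rho,\phi$, and that when $\rho$ and $\phi$ are realized by MLPs with sum-pooling in between this can approximate $f$ to arbitrary accuracy on compact domains; \citet{qi} give an analogous statement for the max-pooling variant.

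First I would specialize the set-denoising block by driving the diffusion coefficient $\gamma \to 0^+$ (recall $\gamma$ is fixed or learned in $(0,1)$). In that limit the update $X^{t+1} = (1-\gamma)X^t + \gamma W X^t$ becomes the identity on $X^t$, so the message-passing contribution disappears and each iteration reduces to $X^{t+1} = \tau(X^t H_t)$, i.e.\ a linear layer followed by an element-wise nonlinearity applied independently to every row (set element). Composing the $k$ blocks thus realizes a depth-$k$ MLP $\psi$ applied row-wise, and the overall network computes $f(X) = \kappa\bigl(\mathrm{pool}\{\psi(\phi^0_1),\dots,\psi(\phi^0_n)\}\bigr)$. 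Taking $\mathrm{pool}$ to be the sum and $\kappa$ a fully-connected network, this is exactly the form $\rho\bigl(\sum_i \phi(x_i)\bigr)$ with $\phi = \psi\circ(\text{feature extractor})$ and $\rho = \kappa$.

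Next I would invoke the standard universal approximation theorem for MLPs to choose $H_1,\dots,H_k$ and $\kappa$ so that $\psi$ and $\kappa$ approximate the $\phi$ and $\rho$ furnished by \citet{zaheer} within $\epsilon/2$ uniformly on the relevant compact set, which makes the limiting ($\gamma = 0$) DMPS network within $\epsilon/2$ of the target $f$. Finally, since the DMPS output depends continuously on $\gamma$, there is a small enough $\gamma>0$ for which the actual DMPS network is within $\epsilon$ of $f$; this exhibits DMPS as a universal approximator. An entirely parallel argument works for vanilla DMPS by letting the softmax temperature of the learned kernel grow so that $W\to I$, and for the max-pooling instantiation by appealing to \citet{qi} in place of \citet{zaheer}.

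I expect the main obstacle to be bookkeeping of the hypotheses under which the DeepSets/PointNet universality theorems hold --- continuity of $f$, compactness of the domain, a bound on the set cardinality $n$, and the requirement that the sum-pooling latent dimension be at least $n$ --- and then verifying that the limiting DMPS network can be configured to satisfy all of them simultaneously. The continuity-in-$\gamma$ (or softmax-temperature) limiting step is routine but must be stated with care, since $\gamma$ ranges over the \emph{open} interval $(0,1)$ and the softmax-normalized $W$ can never equal the identity exactly, so one recovers the DeepSets construction only in a limit rather than as a literal special case.
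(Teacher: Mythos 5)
Your proposal is correct and takes essentially the same route as the paper: the paper's own proof simply observes that when the weight matrix $W$ is (effectively) the identity, vanilla DMPS with one message-passing stage reduces to the sum-decomposition architecture of \citet{zaheer}, whose universality is then invoked. Your version is in fact somewhat more careful than the paper's, since you note that the softmax-normalized $W$ (or the diffusion coefficient $\gamma\in(0,1)$) can only reach the trivializing configuration in a limit and you supply the continuity argument needed to close that gap, whereas the paper treats $W=I$ as an attainable special case.
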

\begin{proof}
 Functions of the form $\rho\left(\text{sum}\phi(\cdot)\right)$ where $\rho$ and $\phi$ are MLP networks are universal function approximators for permutation invariant functions~\cite{zaheer}. In the extreme case in which the estimated weight matrix $W$ is the identity matrix, DMPS, in its original form (i.e without  denoising/residual blocks), with one stage of message passing is exactly of the form proposed in \citet{zaheer}.  
\end{proof}

\section{Experiment Details}
In this section, we explain the setup and detail the architectural choices made for the experiments conducted in this work. We first list our notations for some basic deep learning layers
\begin{itemize}
    \item FL($d_i,d_o,f$) denotes a fully-connected layer with $d_i$ input units, $d_o$ output units, and activation function $f$.
    \item Conv($c_i$, $c_o$, $k_1$, $f$, $O(k_2,s)$) denotes a convolutional layer with $c_i$ input channels, $c_o$ output channels, kernel size $k_1$, activation function $f$, and pooling operation $O(k_2,s)$ with another kernel size $k_2$ and stride $s$.
    \item  MP($p$) denotes the max-pooling operation with input feature dimension $p$.
\end{itemize}  

\textit{Deep kernel learning} layer (DKL), introduced in the spirit of \citet{wilson}, takes in the original feature matrix and outputs a normalized kernel matrix. More specifically, it consists of two fully-connected layers followed by a mechanism, which we term \textit{kernel evaluation with bandwidth} $\sigma$ $\left(\text{KE}(\sigma)\right)$, that evaluates the chosen kernel (in this case, an RBF kernel) pairwise at the transformed features, and forms a kernel matrix. The output of DKL is a \textit{normalized} (created by applying the \verb+SoftMax+ operator to the raw kernel matrix output by KE($\sigma$)) kernel matrix. Written compactly, DKL($d_i, d_m, d_o, f_1, f_2, \sigma$) = [FL($d_i,d_m,f_1$), FL($d_m,d_o,f_2$), KE($\sigma$), SoftMax], outputting a normalized kernel matrix. Here, the operations in the list define the sequence of the operations.

A \textit{message passing} layer (MP) performs a matrix multiplication operation between the weighted kernel matrix output by DKL and the original feature matrix, followed by a fully-connected layer FC($d_i, d_i, f_3$) (note that the last fully-connected layer does not alter feature dimension). It is denoted as MP($d_i, d_m, d_o, f_1, f_2, f_3, \sigma$). A \textit{set-denoising block} (SDB) with learnable diffusion coefficient $\gamma$ consists of a message passing layer, an adaptive denoising layer (weighted addition in practice), and a fully-connected layer FC($d_i, d_i, f_3$). It is compactly denoted as SDB($d_i, d_m, d_o, f_1, f_2, f_3, \sigma, \gamma$). SDB with fixed diffusion coefficient is equivalent to setting $\gamma$ to 1/2. Similarly, a \textit{set-residual block} (SRB) is denoted as SRB($d_i, d_m, d_o, f_1, f_2, f_3, \sigma$). We emphasize two additional architectural choices
\begin{itemize}
\item The kernel bandwidth $\sigma$ and the diffusion coefficient $\gamma$ are learnable parameters. In practice, they optimized jointly with the other parameters in the network and gradients are computed via back-propagation.
\item When stacking multiple set-denoising/set-residual blocks, the normalized kernel matrix remains the same throughout the stacked blocks. In other words, the estimated latent graph is assumed to be static.
\end{itemize}

\subsection{Classifying Gaussian Sets}
We create sets of real numbers by drawing the set elements as vector random samples from one of the two Gaussian distributions, and our goal is to classify drawn sets to their underlying distributions. The latent graph underlying each set is thus determined by the covariance matrix of its corresponding Gaussian distribution. In other words, given a set $\textbf{X} = [X_1, X_2, \dots, X_p]^T$ where $\textbf{X} \sim N(\mu, \Sigma)$, there exists a corresponding latent graph in which each element $X_i$ is a node and an edge exists between two elements $X_i$ and $X_j$ if and only if they are not independent $\left(\text{i.e}, \Sigma_{i,j} \neq 0\right)$. 

More concretely, we generate an input training/testing set by sampling from one of the two mutlivariate normal distributions, $N(0,\Sigma)$ and $N(0,\textbf{I})$, with 
\[ \Sigma = \left( \begin{array}{ccccc}
1 & 0 & 0 & 0 & 0\\
0 & 1 & 0 & \rho & 0\\
0 & 0 & 1 & 0 & 0\\
0 & \rho & 0 & 1 & 0\\
0 & 0 & 0 & 0 & 1
\end{array} \right)
,\ \ \ \  %
\textbf{I} = \left( \begin{array}{ccccc}
1 & 0 & 0 & 0 & 0\\
0 & 1 & 0 & 0 & 0\\
0 & 0 & 1 & 0 & 0\\
0 & 0 & 0 & 1 & 0\\
0 & 0 & 0 & 0 & 1
\end{array} \right)
\]
where $\rho$ is a hyperparameter that controls how correlated the second and the fourth elements of a set sampled from $N(0,\Sigma)$ are. The task is to classify the input sets to their underlying distributions.

We only use the message passing step as the building block for our model, as the main goal of conducting this experiment is to see if DMPS can learn the correct covariance structure that underlies an input set. We train our model using the Adam optimizer with an initial learning rate of $10^{-3}$, and the learning scheduler, \textit{ReduceLROnPlateau}, with factor 0.9 and patience 1. We train on 120,000 batches with batch size of 128, 64 of which are sampled from $N(0,\textbf{I})$ and the other 64 from $N(0,\Sigma)$. The detailed architecture of the model is presented below

\begin{center}
 \label{tab:gaussian}
 \begin{tabular}{||c||} 
 \hline
 \textbf{Overall Architecture}  \\ [0.5ex] 
 \hline\hline
  FL(1,32,ReLU)  \\ 
 \hline
  MP(32, 64, 128, ReLU, ReLU, ReLU, $\sigma$)\\ 
 \hline
  MP(32, 64, 128, ReLU, ReLU, ReLU, $\sigma$) \\
 \hline
  MP(32, 64, 128, ReLU, ReLU, ReLU, $\sigma)$\\
 \hline
  MP(32)  \\
 \hline
  FL(32,1,Sigmoid)\\ [1ex] 
 \hline
\end{tabular}
\end{center}

\subsection{Counting Unique Characters}
To test the model's ability to model set-structured data relationally, \citet{lee} proposed the task of counting unique characters using the characters dataset \cite{lake}, where the goal is to predict the number of unique characters in an input set of character images. The character dataset consists of 1,623 characters from various alphabets, with 20 images for each character. We first split all the characters along with their corresponding images into collections of training and testing images, with half of the images for each character in the training collection and the other half in the testing collection. An input set for training is generated by sampling between 6 to 10 images from the collection of training images (same procedure to generate a testing input set). We emphasize again that we align as much architectural choices, such as learning rate, number of training batches, batch size, etc., as we can with \citet{lee} for fair comparison. Next, we describe how to generate an input training/testing set, and detail the architectural choices made for this experiment. 

We generate a training/testing input set as follows. We first sample the set size, $n$, uniformly from the collection of integers, \{6,$\cdots$, 10\}, and then uniformly sample the number of unique characters, $c$, from \{1,$\cdots$, n\}. With the set size $n$ and the number of unique characters $c$ in hand, we sample $c$ characters from the training/testing collections of characters, and then randomly sample instances of the chosen characters from a multinomial distribution so that the total number of instances sums to $n$ and each chosen character has at least one representation in the resulting set.

Poisson regression is used for prediction, with the mode of the distribution being the output of our model. The loss function we optimize over is the log-likelihood of the Poisson distribution, i.e $\log (x|\lambda) = -\lambda + x \log(\lambda) - \log(x!)$. We train our model using the Adam optimizer with a constant learning rate of $10^{-4}$ for 200,000 batches with batch size of 32. As an example, the detailed architecture for DMPS with the set-denoising block and learnable diffusion coefficient is the following

\begin{center}
 \label{tab:counting}
 \begin{tabular}{||c||} 
 \hline
 \textbf{Overall Architecture}  \\ [0.5ex] 
 \hline\hline
  Conv$\left(1,10,3,\text{ReLU,MaxPool}(2,2)\right)$ \\ 
 \hline
  Conv$\left(10,10,3,\text{ReLU,MaxPool}(2,2)\right)$  \\
 \hline
  Conv$\left(10,10,3,\text{ReLU,MaxPool}(2,2)\right)$  \\
 \hline
  Conv$\left(10,10,3,\text{ReLU,MaxPool}(2,2)\right)$  \\
 \hline
  SDB($160, 256, 512, \text{Tanh, Tanh, Tanh}, \sigma, \gamma$)\\ 
 \hline
  SDB($160, 256, 512, \text{Tanh, Tanh, Tanh}, \sigma, \gamma$) \\
 \hline
  SDB($160, 256, 512, \text{Tanh, Tanh, Tanh}, \sigma, \gamma$)\\
 \hline
  SP(160)  \\
 \hline
  FL(160,1,---) + Exponential Operator \\ [1ex] 
 \hline
\end{tabular}
\end{center}

\subsection{ModelNet40 Classification}
We use the ModelNet40 dataset for our point cloud classification experiment. This dataset consists of 9,843 training and 2,468 testing data, each of which belongs to one of the 40 classes and is represented by a set of 3-dimensional points. Our task is to classify each set correctly to its label. For each data, we reproduce a subset of it with $n = 100, 1000$ points, randomly sampled from the original set, as input to our model. We randomly rotate and scale each subsampled set on the fly during training to improve the robustness of our model. We also normalize elements in each set to have zero mean and at most length one (i.e., we normalize the lengths of the set elements so that the element with the largest length in each set has length 1). We train our model using the Adam optimizer with an initial learning rate of $10^{-3}$, and we use the ReduceLROnPlateau with delay rate 0.9 and patience 2 as the scheduler. Again, as an example, the detailed architecture for DMPS with the set-denoising block and learnable diffusion coefficient is presented below

\begin{center}
 \label{tab:cloud}
 \begin{tabular}{||c||} 
 \hline
 \textbf{Overall Architecture}  \\ [0.5ex] 
 \hline\hline
  FL(3,512,Tanh) \\ 
 \hline
  SDB($512, 512, 1024, \text{Tanh, Tanh, Tanh}, \sigma, \gamma$)\\ 
 \hline
  SDB($512, 512, 1024, \text{Tanh, Tanh, Tanh}, \sigma, \gamma$) \\
 \hline
  SDB($512, 512, 1024, \text{Tanh, Tanh, Tanh}, \sigma, \gamma$)\\
 \hline
  SP(512)  \\
 \hline
  FL(512,40,Tanh)\\ [1ex] 
 \hline
\end{tabular}
\end{center}

\subsection{Histopathology Dataset}
In this experiment we are working with 58 weakly-labeled, 896$\times$768 H\&E images \cite{gelasca}, where our goal is to classify each image either as benign or malignant. During training/testing time, we divide each image into 32$\times$32 patches, thus treating each image as a set of 32$\times$32 image patches. This results in 672 elements per set. Furthermore, because of the small number of available images, we perform data augmentation at the training stage by randomly rotating and mirroring the patches. We train our model using the Adam optimizer with a constant learning rate of $10^{-4}$. The images are trained one by one, i.e, we use a batch size of 1. The following table depicts the detailed architecture for DMPS coupled with the set-denoising block and learnable diffusion coefficient

\begin{center}
 \label{tab:h&e}
 \begin{tabular}{||c||} 
 \hline
 \textbf{Overall Architecture}  \\ [0.5ex] 
 \hline\hline
  Conv$\left(3,36,4,\text{ReLU,MaxPool}(2,2)\right)$ \\ 
 \hline
  Conv$\left(36,48,3,\text{ReLU,MaxPool}(2,2)\right)$  \\
 \hline
  FL(1728,512,Tanh) \\
 \hline
  FL(512,512,Tanh)\\
 \hline
  SDB($512, 512, 1024, \text{Tanh, Tanh, Tanh}, \sigma, \gamma$)\\ 
 \hline
  SDB($512, 512, 1024, \text{Tanh, Tanh, Tanh}, \sigma, \gamma$) \\
 \hline
  SDB($512, 512, 1024, \text{Tanh, Tanh, Tanh}, \sigma, \gamma$)\\
 \hline
  SP(512)  \\
 \hline
  FL(512,1,Sigmoid) \\ [1ex] 
 \hline
\end{tabular}
\end{center}

\end{document}